\begin{document}
\theoremstyle{plain} \newtheorem{lemma}{\textbf{Lemma}} \newtheorem{prop}{\textbf{Proposition}}\newtheorem{theorem}{\textbf{Theorem}}\setcounter{theorem}{0}
\newtheorem{corollary}{\textbf{Corollary}} \newtheorem{assumption}{\textbf{Assumption}}
\newtheorem{example}{\textbf{Example}} \newtheorem{definition}{\textbf{Definition}}
\newtheorem{fact}{\textbf{Fact}} \theoremstyle{definition}

\theoremstyle{remark}\newtheorem{remark}{\textbf{Remark}}

\title{Deep Primal-Dual Reinforcement Learning: Accelerating Actor-Critic using Bellman Duality} 

\author
{
	Woon Sang Cho\thanks{Department of Operations Research and Financial Engineering, Princeton University, Princeton 08544, USA; Email:
		\texttt{woonsang@princeton.edu}.}
	\qquad Mengdi Wang\thanks{Department of Operations Research and Financial Engineering, Princeton University, Princeton 08544, USA; Email:
		\texttt{mengdiw@princeton.edu}.}
}

\date{December 3, 2017}

\maketitle
\begin{abstract}
We develop a parameterized Primal-Dual $\pi$ Learning method based on deep neural networks for Markov decision process with large state space and off-policy reinforcement learning. In contrast to the popular Q-learning and actor-critic methods that are based on successive approximations to the nonlinear Bellman equation, our method makes primal-dual updates to the policy and value functions utilizing the fundamental linear Bellman duality.  Naive parametrization of the primal-dual $\pi$ learning method using deep neural networks would encounter two major challenges: (1) each update requires computing a probability distribution over the state space and is intractable; (2) the iterates are unstable since the parameterized Lagrangian function is no longer linear. We address these challenges by proposing a relaxed Lagrangian formulation with a regularization penalty using the advantage function. We show that the dual policy update step in our method is equivalent to the policy gradient update in the actor-critic method in some special case, while the value updates differ substantially. The main advantage of the primal-dual $\pi$ learning method lies in that the value and policy updates are closely coupled together using the Bellman duality and therefore more informative. Experiments on a simple cart-pole problem show that the algorithm significantly outperforms the one-step temporal-difference actor-critic method, which is the most relevant benchmark method to compare with. We believe that the primal-dual updates to the value and policy functions would expedite the learning process. The proposed methods might open a door to more efficient algorithms and sharper theoretical analysis.
\end{abstract}

\section{Introduction}

Linear programming has been known for decades as a classical approach for Markov Decision Process (MDP).  While the value and policy iterations for MDP lead to the popular Q-learning and actor-critic methods for reinforcement learning, it has remained open whether linear programming approaches  apply to reinforcement learning at all. We consider a discounted MDP, which can be described by a tuple $\mathcal{M} = (\mathbb{S}, \mathbb{A}, \mathbb{P}, \bf{r}, \gamma)$. The optimal value function $V^* \in\Re^{\mathbb{S}}$ satisfies the nonlinear Bellman equation: 
$$
V^{*}(s) 
= \max_{\pi} \mathbb{E}_{\pi} \Bigg[ \sum_{t=1}^{\infty}\gamma^{t-1}r_{t} \vert s_{0} = s \Bigg]
= \max_{\pi} \mathbb{E}_{a \sim \pi(\cdot \vert s)} \Bigg[ \sum_{s' \in \mathbb{S}} p_{s,a \rightarrow s'} r + 
	\gamma \sum_{s' \in \mathbb{S}} p_{s,a \rightarrow s'} V^{\pi}(s')
	 \Bigg] 
$$ for all states $s\in\mathbb{S}$. This system of $|\mathbb{S}| \times |\mathbb{S}||\mathbb{A}|$ equations can be solved through a linear program (see \cite{Puterman:1994:MDP:528623} and \cite{doi:10.1287/opre.51.6.850.24925}):
\begin{equation*}
\begin{array}{ll@{}rr}
\text{minimize}  & \displaystyle (1-\gamma) \mathbb{E}_{s \sim q(s)} \big[V(s) \big] &\\
\text{subject to}& \displaystyle V(s) \geqslant  r(s,a) + \gamma \mathbb{E}_{s' \vert s,a}\big[V(s')\big] ,  &&\forall (s,a) \in \mathbb{S} \times \mathbb{A}                                           
\end{array}
\end{equation*}
\\
and the Lagrangian formulation of
\begin{equation} \label{eq:1}
\min_{V} \max_{\mu \geqslant 0} \mathbb{L}(V,\mu) \coloneqq (1-\gamma) \mathbb{E}_{s \sim q(s)} \big[ V(s)\big] + \sum_{(s,a) \in \mathbb{S} \times \mathbb{A}} \mu (s,a) \cdot \bigg[ r(s,a) + \gamma \mathbb{E}_{s' \vert s,a}\big[V(s')\big] - V(s)  \bigg]
\end{equation}
where $q$ is some initial distribution over the state space $\mathbb{S}$.

\subsection{Motivation}
We take motivation from a novel stochastic primal-dual algorithm that can find $\epsilon$-optimal policy in nearly-linear run time for the worst case, developed in \cite{DBLP:journals/corr/Wang17i}. However, this algorithm is developed for the tabular-setting formulation (\ref{eq:1}) and does not directly apply to problems with large or even infinite state and action spaces.  The natural question that follows is: 
$$\textit{How to make the primal-dual method work with function approximation and parametrization?}$$
\noindent
Thus we consider the following optimization problem, where primal and dual variables are parametrized by $\theta_{v}$ and $\theta_{\mu}$, respectively.
\begin{equation} \label{eq:2}
\min_{\theta_{v}} \max_{\substack{\theta_{\mu} \\ \text{s.t.} \mu_{\theta_{\mu}} \geqslant 0}} \mathbb{L}(\theta_{v},\theta_{\mu}) \coloneqq (1-\gamma) \mathbb{E}_{s \sim q(s)} \big[ V_{\theta_{v}}(s)\big] + \sum_{(s,a) \in \mathbb{S} \times \mathbb{A}} \mu_{\theta_{\mu}} (s,a) \cdot \bigg[ r(s,a) + \gamma \mathbb{E}_{s' \vert s,a}\big[V_{\theta_{v}}(s')\big] - V_{\theta_{v}}(s)  \bigg]
\end{equation}
Such a parametrization loses the bilinear structure of the Lagrangian function in the tabular case. Our experiments show that solving this problem directly using stochastic gradient descent results in unstable iterates. 
\noindent
\\\\
In what follows, we denote the advantage function of a fixed policy $\pi$ as ${A}^{\pi}(s,a) = Q^{\pi}(s,a) - V^{\pi}(s) = r(s,a) + \gamma \mathbb{E}_{s' \vert s,a}\big[V^{\pi}(s')\big] - V^{\pi}(s)$. We denote the parametrized advantage function as $A_{\theta_{v}}(s,a) =  r(s,a) + \gamma \mathbb{E}_{s' \vert s,a}\big[V_{\theta_{v}}(s')\big] - V_{\theta_{v}}(s)$. We further denote the true one-step temporal difference (TD) error as $\delta^{\pi}(s,a)=r(s,a) + \gamma V^{\pi}(s') - V^{\pi}(s)$, which is an unbiased estimate of $A^{\pi}(s,a)$, and the approximated TD error as $\delta_{\theta_{v}}(s,a)=r(s,a) + \gamma V_{\theta_{v}}(s') - V_{\theta_{v}}(s)$, which is a biased estimate of $A_{\theta_{v}}(s,a)$.

\subsection{Problem Formulation}
Motivated by the temporal difference learning mechanism and Karush-Kuhn-Tucker (KKT) conditions, in particular, complementary slackness conditions, we regularize the parametrized Lagrangian by adding a penalty term $A_{\theta_{v}}(s,a)^{2}$ and consider the revised minimax formulation.
\begin{equation} \label{eq:3}
\min_{\theta_{v}} \max_{\substack{\theta_{\mu} \\ \text{s.t.} \mu_{\theta_{\mu}} \geqslant 0}} \widetilde{\mathbb{L}}(\theta_{v}, \theta_{\mu})\coloneqq (1-\gamma) \mathbb{E}_{s \sim q(s)} \big[ V_{\theta_{v}}(s)\big] + \sum_{(s,a) \in \mathbb{S} \times \mathbb{A}} \mu_{\theta_{\mu}} (s,a) \cdot A_{\theta_{v}}(s,a) + c \cdot A_{\theta_{v}}(s,a)^{2}
\end{equation}
\noindent
Even when the problem becomes nonlinear and non-convex after parameterizing the value function with neural networks, we empirically show that it suffices to search around the neighborhood of the optimal parameters $\theta^{*}_{v}$ that meet the complementary slackness conditions: $\mu_{\theta_{\mu}}(s,a) \cdot \big( r(s,a)+\gamma V_{\theta^{*}_{v}}(s') - V_{\theta^{*}_{v}}(s) \big) = 0$, $\forall s,s' \in \mathbb{S}, \forall a \in \mathbb{A}$. 
\\\\
Using Theorem 6.9.1 in \cite{Puterman:1994:MDP:528623}, we present an alternative formulation. The theorem shows a critical relationship between a randomized stationary policy $\tilde{\pi}$ and a feasible dual solution $\tilde{\mu}(s,a)$ in that $\tilde{\pi}(a \vert s) = \frac{\tilde{\mu}(s,a)}{\sum_{a \in \mathbb{A}} \tilde{\mu}(s,a)}$. 
\\\\
If we introduce an auxiliary variable $\alpha$ as a distribution over the state space $\mathbb{S}$, then we obtain $\tilde{\pi}(a \vert s) = \frac{\tilde{\mu}(s,a)}{\tilde{\alpha}(s)} \Longleftrightarrow \tilde{\mu}(s,a) = \tilde{\alpha}(s) \tilde{\pi}(a \vert s)$. Since both $\alpha$ and $\pi$ are probability measures, the non-negativity constraint of $\mu$ trivially holds. This yields an alternative formulation of equation (\ref{eq:3}) as follows.
\begin{equation} \label{eq:4}
\min_{\theta_{v}} \max\limits_{\substack{\alpha \in \mathbb{P}(S) \\ \pi \in \mathbb{P}(A)}} \widetilde{\mathbb{L}}(\theta_{v}, \theta_{\pi})\coloneqq (1-\gamma) \mathbb{E}_{s \sim q(s)} \big[ V_{\theta_{v}}(s)\big] + \sum_{(s,a) \in \mathbb{S} \times \mathbb{A}} \alpha(s) \pi_{\theta_{\pi}}(a \vert s) \cdot A_{\theta_{v}}(s,a) + c \cdot A_{\theta_{v}}(s,a)^{2}
\end{equation}

\subsection{Methodology}
A related work is the stochastic dual ascent for solving the Bellman Lagrangian problem (see \cite{anonymous2018boosting}). Yet, this method needs to solve a sequence of partial Lagrangian maximization problems to update the value function for each policy, which could be time consuming. In addition, the dual ascent method is not guaranteed to converge due to the lack of curvature of the Bellman Lagrangian function. It also requires computing an analytic solution for an auxiliary variable, which might be intractable for large state space.  In contrast, the proposed primal-dual method is much simpler to implement and doesn't involve solving subproblems. The primal-dual method enjoys strong convergence guarantees in the tabular setting.
\\\\
Thus we naturally extend the generic primal-dual framework to the regularized Lagrangian form with differentiable primal and dual function approximators and show that our algorithm outperforms the benchmark: one-step parametrized temporal-difference actor-critic algorithm. We selected this benchmark algorithm since both algorithms use one state-transition sample to make parameter updates, without using any mini-batch or experience replay.

\section{Main Results}
\subsection{Algorithm}
We propose a generic framework for Deep Primal-Dual Policy Learning method for solving problem (\ref{eq:3}). It makes update to the policy and value networks at every sample transition. It can be extended to process mini-batches of past samples.  
\begin{algorithm}
\caption{Deep Primal-Dual Policy Learning}
\label{pseudoPSO}
\begin{algorithmic}[1]
\State \textbf{Input: } $\mathcal{M}=(\mathbb{S}, \mathbb{A}, \mathbb{P}, \bf{r}, \gamma), \eta_{v}, \eta_{\pi}, \text{T}$
\State Initialize deep value and policy network weights $\theta_{v}$ and $\theta_{\pi}$, respectively.

\For{episode $e  = 1,\dots $}
\State Initialize state $s_{0}$, and $s' \gets s_{0}$.
\For{$t = 1, \dots, \text{T}$}
	\State $s \gets s'$ , sample $a \sim \pi_{\theta_{\pi}}(\cdot \vert s)$, and observe reward $r(s,a)$ and the next state $s'$.
    \State $\delta \gets r(s,a) + \gamma V_{\theta_{v}}(s') - V_{\theta_{v}}(s)$
    \State \bf{Primal-update}: $\theta_{v} \gets \theta_{v} - \eta_{v} \widehat{\nabla}_{\theta_{v}} \widetilde{\mathbb{L}}(\theta_{v}, \theta_{\pi})$ \Comment{$\widehat{\nabla}_{\theta_{v}}$ denotes sampled gradient}
    \State \bf{Dual-update}: $\theta_{\pi} \gets \theta_{\pi} + \eta_{\pi} \nabla_{\theta_{\pi}} \log \pi_{\theta_{\pi}}(a \vert s) \cdot \delta$ \Comment{Theorem 1 in Section 2.3}
    
\EndFor
\EndFor
\end{algorithmic}
\end{algorithm}

\subsection{Empirical result}
\begin{figure}[h]
\hspace{-1.5cm}   
\includegraphics[scale=0.53]{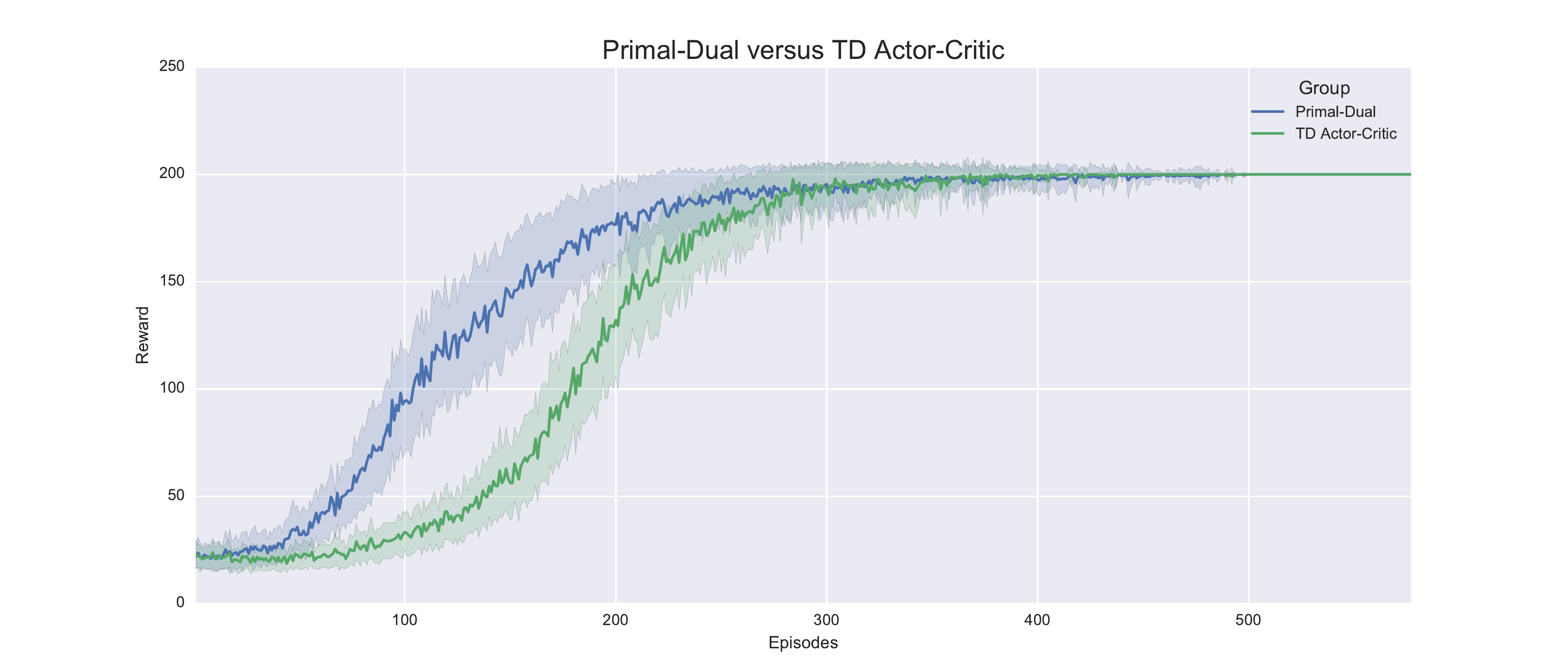}
\caption{Primal-Dual algorithm outperforms one-step TD Actor-Critic. This suggests that value updates should utilize, rather than neglect, the policy $\pi$. In other words, the critic should be informed about the actor's behavior preferences and make discreet updates, unlike previous works (e.g. \cite{DBLP:journals/corr/MnihBMGLHSK16}) in which critic learns blindly from the actor's tendencies. }
\label{fig:result}
\end{figure}
\noindent
We used the OpenAI Gym testbed (\cite{1606.01540}) to verify the algorithm on a simple CartPole-v0 problem. This problem is considered "solved" if the algorithm achieves an average cumulative reward of 195 over 100-200 consecutive episodes. The maximal achievable score is 200 per episode. For comparison, we  also employed the one-step semi-gradient TD method where the primal update uses the  targets $r(s,a) + \gamma V_{\theta_{v}}(s')$. We used a softmax activation for policy $\pi_{\theta_{\pi}}$ and $\alpha$ is assumed to be a probability distribution over the state space $\mathbb{S}$ since states are sampled according to the stationary distribution \textit{implied by $\pi_{\theta_{\pi}}$}. Therefore, it suffices to only update $\pi_{\theta_{\pi}}$ in the dual iteration step.
\\\\
For each $V$ and $\pi$ function approximators, we used neural networks of two hidden layers with 512 tanh units, learning rates $\eta_{v}=0.001$, $\eta_{\pi}=0.00001$, $\gamma = 0.99$, and $c=1$. With these hyper-parameters, the on-line algorithm "solved" the problem in approximately 200 $\sim$ 400 episodes, with variance due to initializations. 
\\\\
Furthermore, we compared the algorithm against one-step TD actor-critic. For comparison \footnote{However, we note that such a setting does not result in a completely fair comparison since the magnitude of gradients are different in the two algorithms, and we did not adjust learning rates to account for this difference. In our future work, we plan to search over a range of learning rates and compare best results from the two algorithms.}, in addition to setting the penalty coefficient $c=1$, we used the same hyper-parameters in both algorithms, and ran 100 trials for each algorithm. The shaded area shows the variance of the cumulative reward curves. For visual clarity, we scaled the standard deviation by half, therefore the figure shows $\text{mean} \pm 0.5 \cdot \sigma_{\text{std}}$. Despite slightly larger variance in cumulative reward curves from our algorithm, we observed a significant reduction in sample complexity. 
\\\\
This suggests that the the policy $\pi$ \textit{informs} which state values are more relevant for updating. Thus the value update step conditions on the policy $\pi$, whereas in previous works (e.g. \cite{DBLP:journals/corr/MnihBMGLHSK16}) estimate the value function \textit{independently of the actor's preferences}.


\subsection{Theoretical result}
\begin{theorem}
Suppose that $s$ is drawn from the stationary distribution of the current policy $\pi$.
The policy gradient update in equation \eqref{eq:4} is identical to the policy gradient update in the actor-critic method (\cite{sutton2000policy}).
\end{theorem}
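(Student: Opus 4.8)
The plan is to compute $\nabla_{\theta_{\pi}}\widetilde{\mathbb{L}}(\theta_{v},\theta_{\pi})$ directly from the relaxed Lagrangian \eqref{eq:4} and recognize that, under the stated sampling hypothesis, it coincides term-by-term with the actor-critic policy gradient. First I would observe that of the three summands in \eqref{eq:4}, only the bilinear middle term depends on $\theta_{\pi}$: the initial-distribution term $(1-\gamma)\mathbb{E}_{s\sim q(s)}[V_{\theta_{v}}(s)]$ and the quadratic penalty $c\cdot A_{\theta_{v}}(s,a)^{2}$ involve only $\theta_{v}$, so both vanish under $\nabla_{\theta_{\pi}}$. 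This leaves
$$\nabla_{\theta_{\pi}}\widetilde{\mathbb{L}} = \sum_{(s,a)\in\mathbb{S}\times\mathbb{A}} \alpha(s)\,\nabla_{\theta_{\pi}}\pi_{\theta_{\pi}}(a\vert s)\cdot A_{\theta_{v}}(s,a).$$

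Next I would apply the score-function identity $\nabla_{\theta_{\pi}}\pi_{\theta_{\pi}}(a\vert s)=\pi_{\theta_{\pi}}(a\vert s)\,\nabla_{\theta_{\pi}}\log\pi_{\theta_{\pi}}(a\vert s)$, which rewrites the sum as an expectation against the joint weight $\alpha(s)\pi_{\theta_{\pi}}(a\vert s)$. The decisive step is then to invoke the hypothesis that $s$ is drawn from the stationary distribution $d^{\pi}$ of the current policy, i.e. to set $\alpha(s)=d^{\pi}(s)$, which is exactly the on-policy visitation measure realized when states are sampled along trajectories generated by $\pi_{\theta_{\pi}}$ (as in the empirical setup). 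This yields
$$\nabla_{\theta_{\pi}}\widetilde{\mathbb{L}} = \mathbb{E}_{s\sim d^{\pi},\,a\sim\pi_{\theta_{\pi}}}\big[\,\nabla_{\theta_{\pi}}\log\pi_{\theta_{\pi}}(a\vert s)\cdot A_{\theta_{v}}(s,a)\,\big],$$
which is precisely the advantage form of the policy gradient theorem of \cite{sutton2000policy}, with the critic's approximate advantage $A_{\theta_{v}}$ standing in for $A^{\pi}$.

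Finally I would connect this population gradient to the single-sample update in Algorithm 1. Drawing one transition $(s,a,s')$ with $a\sim\pi_{\theta_{\pi}}(\cdot\vert s)$ gives the stochastic gradient $\nabla_{\theta_{\pi}}\log\pi_{\theta_{\pi}}(a\vert s)\cdot\delta$, where $\delta=r(s,a)+\gamma V_{\theta_{v}}(s')-V_{\theta_{v}}(s)$ is the sampled TD error, an unbiased one-sample estimate of $A_{\theta_{v}}(s,a)$; this reproduces the \textbf{Dual-update} line verbatim and is what the actor-critic method implements. The main obstacle, and the place where the hypothesis does the real work, is the identification $\alpha=d^{\pi}$: without it the state weighting in \eqref{eq:4} would be an arbitrary distribution over $\mathbb{S}$ rather than the on-policy visitation measure, and the match with the actor-critic gradient would break. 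I would therefore take care to argue that sampling states on-policy is exactly what pins the free variable $\alpha$ to the stationary distribution, and to make explicit that the claimed equivalence is an equivalence of ascent directions under the critic's \emph{current} advantage estimate $A_{\theta_{v}}$, not the true advantage $A^{\pi}$.
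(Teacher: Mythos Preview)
Your proposal is correct and follows essentially the same route as the paper: compute $\nabla_{\theta_{\pi}}$ of the Lagrangian, apply the score-function identity, and identify $\alpha$ with the on-policy state distribution to recover the policy gradient form. The only cosmetic differences are that the paper passes through the $Q$-form (dropping the $V$ baseline explicitly) rather than keeping the advantage throughout, and does not spell out the single-sample TD-error connection you add at the end.
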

\begin{proof}
Define the performance measure $\mathcal{J}(\pi) = \mathbb{E}_{s_{0} \sim q(s), \pi} \big[\sum_{t=1}^{\infty}\gamma^{t-1}r_{t} \big]$. From the policy gradient theorem (see \cite{sutton2000policy}), the gradient of $\mathcal{J}$ is 
\begin{equation} \label{eq:5}
\mathbb{E}_{s \sim \rho^{\pi}, a \sim \pi(\cdot \vert s)} \big[ Q^{\pi}(s,a) \nabla_{\theta_{\pi}} \log \pi_{\theta_{\pi}}(a\vert s) \big] 
\end{equation}
\noindent
Now we consider the dual iteration in equation (\ref{eq:1}) (see \cite{DBLP:journals/corr/Wang17i}). Without loss of generality, let $\gamma = 1$, and fix a particular value function as $\tilde{V}$. Thus $\tilde{A}^{\pi}(s,a) = \tilde{Q}^{\pi}(s,a) - \tilde{V}^{\pi}(s) = r(s,a) + \mathbb{E}_{s' \vert s,a}\big[\tilde{V}^{\pi}(s')\big] - \tilde{V}^{\pi}(s)$. The dual optimization becomes
\begin{equation} \label{eq:6}
\max\limits_{\mu \geqslant 0} \sum\limits_{(s,a) \in \mathbb{S} \times \mathbb{A}} \mu (s,a) \cdot \tilde{A}^{\pi}(s,a) 
\Longleftrightarrow  \max\limits_{\substack{\alpha \in \mathbb{P}(S) \\ \pi \in \mathbb{P}(A)}} \sum\limits_{(s,a) \in \mathbb{S} \times \mathbb{A}} \alpha(s) \pi(a \vert s) \cdot \tilde{A}^{\pi}(s,a) 
\Longleftrightarrow  \max\limits_{\substack{\alpha \in \mathbb{P}(S) \\ \pi \in \mathbb{P}(A)}} \mathbb{E}_{\alpha, \pi}\big[ \tilde{A}^{\pi}(s,a) \big]
\end{equation}
\noindent
Next note the gradient of the expression in (\ref{eq:6}) with respect to $\theta_{\pi}$, where without loss of generality, we assume $\pi$ is parametrized by $\theta_{\pi}$.
\begin{eqnarray} 
\nabla_{\theta_{\pi}} \sum_{(s,a) \in \mathbb{S} \times \mathbb{A}} \alpha(s) \pi_{\theta_{\pi}}(a \vert s) \cdot \tilde{A}^{\pi}(s,a) 
&=& \sum_{(s,a) \in \mathbb{S} \times \mathbb{A}} \alpha(s) \nabla_{\theta_{\pi}}  \pi_{\theta_{\pi}}(a \vert s) \cdot \tilde{Q}^{\pi}(s,a)  \nonumber \\
&=& \sum_{(s,a) \in \mathbb{S} \times \mathbb{A}} \alpha(s) \pi(a \vert s) \nabla_{\theta_{\pi}} \log \pi_{\theta_{\pi}}(a \vert s) \cdot \tilde{Q}^{\pi}(s,a) \nonumber \\
&=& \mathbb{E}_{\alpha, \pi} \big[\tilde{Q}^{\pi}(s,a) \cdot \nabla_{\theta_{\pi}} \log \pi_{\theta_{\pi}}(a \vert s) \big] 
\label{eq:7}
\end{eqnarray}
\noindent
Further suppose that $\alpha = \rho^{\pi}$, the stationary distribution under $\pi$, then it is identical to the policy gradient in (\ref{eq:5}).
\end{proof}

\subsection{Algorithmic implications}
A naive application of primal-dual framework requires subsequent updates to $\alpha, \pi$, and $V$. However when the state size is large, it is challenging to estimate $\alpha$, where $\sum_{s} \alpha(s) = 1$. We can avoid a direct estimation of $\alpha$ by Theorem 1, and update $\pi$ by taking on-line samples of the gradient in (\ref{eq:7}). Consequently, the policy update becomes identical to that in an actor-critic algorithm. 


\section{Conclusion}
In this work, we developed a deep primal-dual reinforcement learning algorithm, and showed that it significantly outperforms the relevant benchmark algorithm. We showed that the policy gradient update in the dual-update for the minimax formulation (\ref{eq:1}) is identical to a policy gradient update for solving an MDP. This implies we can avoid the challenging computation of $\alpha$ from the naive application of primal-dual framework. Furthermore, this work suggests that future research on critic updates in actor-critic algorithms should be informed about the actor's behavior "tendencies", rather than only consider actions already taken. 
\nocite{*}
{
\bibliographystyle{ims}
\bibliography{bibfile}
}

\end{document}